\newcommand{\agents}{\mathcal{A}}
\newcommand{\tasks}{\mathcal{T}}
\newcommand{\cspace}{\mathcal{X}}
\newcommand{\obst}{\cspace_{\textrm{obs}}}
\newcommand{\free}{\cspace_{\textrm{free}}}
\newcommand{\path}{\sigma}
\newcommand{\pathSet}{\Sigma}
\newcommand{\ijPath}{\pathSet_{i,j}}
\newcommand{\deltaPath}{\pathSet^\delta}
\newcommand{\minimise}{\textrm{minimise}}
\newcommand{\suchthat}{\textrm{subject to}}
\newcommand{\vertices}{\mathcal{V}}
\newcommand{\edges}{\mathcal{E}}
\newcommand{\bottAssign}{B}
\newcommand{\lambdal}{\underline{\lambda}}
\newcommand{\lambdau}{\overline{\lambda}}
\newcommand{\nmin}{n_\textrm{min}}
\newcommand{\nmax}{n_\textrm{max}}
\newcommand{\ant}{\agents \times \tasks}
\newcommand{\pathmin}{\underline{\sigma}}
\newcommand{\pathmax}{\overline{\sigma}}
\newcommand{\bigO}{\mathcal{O}}
\newcommand{\R}{\mathbb{R}}
\newcommand{\N}{\mathbb{N}}
\newcommand{\cl}{\operatorname{cl}}
\DeclareMathOperator*{\argmin}{argmin} 
\newtheorem{defn}{Definition}
\newtheorem{thm}{Theorem}
\newtheorem{assum}{Assumption}
\newtheorem{rem}{Remark}
\newtheorem{cor}{Corollary}
\newtheorem{prop}{Proposition}
\newtheorem{lem}{Lemma}
\newtheorem{req}{Requirement}
\newtheorem*{problem*}{Problem Statement}
\acrodef{bap}[BAP]{Bottleneck Assignment Problem}
\acrodef{prm}[PRM]{Probabilistic Road-Map}
\acrodef{rrt}[RRT]{Rapidly-exploring Random Tree}
\let\NAT@parse\undefined
\def\BibTeX{{\rm B\kern-.05em{\sc i\kern-.025em b}\kern-.08em
    T\kern-.1667em\lower.7ex\hbox{E}\kern-.125emX}}
\begin{document}
\IEEEoverridecommandlockouts     
\overrideIEEEmargins                                      


\title{\LARGE \bf  Certification of Bottleneck Task Assignment with Shortest Path Criteria
}
\author{Tony A.\ Wood and Maryam Kamgarpour
\thanks{The authors are with the SYCAMORE Lab, École Polytechnique Fédérale de Lausanne (EPFL), 1015 Lausanne, Switzerland
        {\tt\small \{tony.wood,maryam.kamgarpour\}@epfl.ch}}%
}

\maketitle
\thispagestyle{empty}
\pagestyle{empty}

\begin{abstract}
Minimising the longest travel distance for a group of mobile robots with interchangeable goals requires knowledge of the shortest length paths between all robots and goal destinations. Determining the exact length of the shortest paths in an environment with obstacles is NP-hard however. In this paper, we investigate when polynomial-time approximations of the shortest path search are sufficient to determine the optimal assignment of robots to goals. In particular, we propose an algorithm in which the accuracy of the path planning is iteratively increased. The approach provides a certificate when the uncertainties on estimates of the shortest paths become small enough to guarantee the optimality of the goal assignment. To this end, we apply results from assignment sensitivity assuming upper and lower bounds on the length of the shortest paths. We then provide polynomial-time methods to find such bounds by applying sampling-based path planning. The upper bounds are given by feasible paths, the lower bounds are obtained by expanding the sample set and leveraging the knowledge of the sample dispersion. We demonstrate the application of the proposed method with a multi-robot path-planning case study.  
\end{abstract}

\section{Introduction}
Cooperative multi-robot systems provide great value in applications such as coordinated search and rescue, large-scale agriculture, and efficient transportation. Given a group of robots with interchangeable goals, deciding which one is assigned to which goal is crucial for achieving a joint objective. For instance, in a search and rescue mission robots should be assigned to goal destinations such that all possible locations of distressed humans can be visited in a minimal amount of time. Apart from the obvious incentive to complete a cooperative mission at minimum cost or time, an optimal goal assignment can also provide other benefits for multi-robot coordination such as inter-agent collision-avoidance guarantees, \cite{Turpin2014AR,MacAlpine2015CoAI,Wood2020RAL}. If the costs of sending robots to goals are known, optimally deciding which robot should go to which goal corresponds to a well-studied problem called task assignment, see \cite{Burkard2012Book} 
for an overview. Typically, the costs are dependent on the lengths of the shortest paths between each robot and each goal. When the robots have the same constant velocity, minimising the shortest path is equivalent to minimising the travel time. 

When the environment contains obstacles, finding the shortest obstacle-avoiding path function that links two locations is an infinite-dimensional and challenging optimisation problem. Finite-dimensional formulations involve constraints that are either smooth non-convex, see e.g.~\cite{Zhang2021TCST} or mixed-integer, see e.g.~\cite{Marcucci2021Arxiv}. While these methods are powerful in finding good solutions, the problem is NP-hard to solve optimality. There exist sampling-based approaches for shortest path search, see e.g. \cite{Karaman2011IJRR}, but they only converge to an optimal solution asymptotically as the number of samples and the computational complexity approach infinity. The problem of multi-agent goal assignment to minimise the shortest obstacle-avoiding paths is therefore hard and understudied.

We investigate when approximate knowledge of the shortest paths obtained in polynomial time is sufficient to determine an optimal assignment. We focus on the \ac{bap}, see e.g., \cite{Khoo2023TAC}, where the objective is to minimise the largest cost among the assigned agent-task pairs, referred to as the bottleneck. The \ac{bap} is particularly relevant for minimum-time requirements when a team of agents operates in parallel because the largest assigned agent-task cost then relates to the time of completing all tasks, \cite{Burkard2012Book}. In this work, we follow an iterative approach to find an assignment where the accuracy of the path planning is increased step by step. As a stopping criterion of the iterative increase of accuracy, we check if the assignment is guaranteed to be a minimiser of the \ac{bap} defined with respect to the true shortest paths. 

For multi-agent path planning with adjustable accuracy, we consider sampled roadmap graphs as they are a popular approach for obtaining obstacle-avoiding paths, see e.g. \cite{Kavraki1998ToRaA, LaValle2006Book, Karaman2011IJRR}. The key idea is to probe the configuration space with a desired number of samples and check if they can be connected without intersecting any obstacles. Intuitively, the complexity of the approach increases when more samples are considered. When the subset of the configuration space that has a safe distance from obstacles is known and the areas that are reachable from every configuration are easily computed, the special case of visibility-based roadmaps provides an efficient approach to optimal path planning \cite{Nissoux1999IROS}. In this paper, we consider a more general safe set setting where the visibility domain is not known. The most common roadmap algorithm for this setting, called \ac{prm}, utilises probabilistic sampling where samples are drawn from the configuration space at random. It has been shown to provide probabilistic completeness \cite{Kavraki1998ToRaA}, meaning that the probability of finding a feasible path, if one exists, goes to one when the number of samples goes to infinity. Variants of \ac{prm} \cite{Karaman2011IJRR} have been shown to also be asymptotically optimal, meaning that the obtained path converges to the shortest obstacle-avoiding path with probability one. In \cite{Janson2018IJRR} the convergence rate of such algorithms is investigated as a function of the sample dispersion. However, these results do not directly provide a converging lower bound on the shortest path length. 

To obtain a stopping criterion for the iterative algorithm we apply assignment sensitivity analysis. In particular, given estimates of the path lengths between all robots and goals and a bottleneck optimising assignment, we quantify how much error in the estimates can be tolerated in order for the assignment to remain optimal. Methods to quantify such sensitivity for different types of assignment objectives and perturbation models have been derived in \cite{Sotskov1995DAM, Nam2015ICRA, Michael2022EJOR}. While these methods have been used for post-assignment analysis, they have not been applied to determine whether the accuracy of the cost estimates is sufficient for knowing the true optimal assignment in a scenario where the estimates are being refined at the time the assignment is made. 

The contributions of this paper are twofold: 

    i) The first contribution is providing a generic approach for testing whether polynomial-time path planning is sufficient for guaranteeing optimal goal assignment. We apply recent results on bottleneck assignment sensitivity presented in \cite{Michael2022EJOR} and propose a novel iterative algorithm that increases the accuracy of the path planning step-wise and returns a certificate when an assignment can be obtained that is known to be optimal. We prove that if the certificate is returned, it is guaranteed that the assignment made with estimates of the path lengths is optimal for the true shortest paths. The generic approach relies on the existence of polynomial-time methods to determine upper and lower bounds on the shortest paths between all robots and goals that converge as the invested computational complexity increases.
    
    ii) The second contribution is deriving specific bounds on the shortest paths between robots and goals. 
    Given a map of obstacles and the required safety distance that robots must keep from them, we propose a sampling-based method to obtain both upper and lower bounds. The upper bounds are determined by finding feasible paths via a roadmap that is generated within the safe set. The lower bounds are obtained by generating a second roadmap that contains nodes representing positions that can be closer to the obstacles than the required safety distance. The distance from the obstacles is a function of the sample dispersion. Results from \cite{Janson2018IJRR} are then applied to these non-feasible paths in a novel manner to bound the optimal path lengths from below. We prove that the bounds converge to the true shortest safe paths as the sample size increases. 

\section{Shortest Bottleneck Path}\label{sec:problemAndApproach}
We begin by providing some definitions required to formulate the problem mathematically. Let $\cspace \subset \R ^ d$ be a compact Euclidean configuration space with dimension $d \in \N$. For a given margin, $\delta > 0$, and a subset of the configuration space, $\mathcal{C} \subset \cspace$, the $\delta$-interior of $\mathcal{C}$, denoted $\mathcal{C}^\delta := \{x \in \cspace \, | \, \inf_{y \in \cspace \setminus \mathcal{C}} \|x - y\|_2 \geq \delta\}$, is the set of all configurations that are at least a distance of $\delta$ away from $\cspace \setminus \mathcal{C}$. Given a closed set of obstacles, $\obst \subset \cspace$, we define the obstacle free-space as $\free := \cl(\cspace \setminus \obst)$, where $\cl(S)$ denotes the closure of set $S$. Let there be a set of agents, $\agents$, where each agent, $i \in \agents$, is a robot with initial configuration $p_i \in \free$. Cooperatively the agents are required to fulfil a set of tasks, $\tasks$, where each task, $j \in \tasks$, represents a goal configuration $g_j \in \free$. A path is a continuous function, $\path: [0,1] \rightarrow \cspace$, with bounded variation. Let $\pathSet$ denote the set of all paths and $c : \pathSet \rightarrow \R_{\geq 0}$ map a path to its arc length. For obstacle-avoiding path planning, we define the following two particular subsets of paths. 
\begin{defn}[Robot-goal path] \label{defn:agentTaskPath}
A path, $\path \in \Sigma$, connects robot $i \in \agents$ and goal $j \in \tasks$, with positions $p_i$ and $g_j$, respectively, if $\path(0) = p_i$ and $\path(1) = g_j$. The set of all paths connecting $i$ and $j$ is denoted by $\ijPath$.
\end{defn}
\begin{defn}[$\delta$-clearance path]\label{defn:deltaClearance}
For $\delta > 0$, a path, $\path \in \pathSet$, has $\delta$-clearance in $\mathcal{C} \subset \cspace$ if $\sigma(\tau) \in \mathcal{C}^{\delta}$ for all $\tau \in [0,1]$. The set of all paths with $\delta$-clearance in $\mathcal{C}$ is denoted by $\deltaPath_{\mathcal{C}}$.
\end{defn} 

We assume that there are no more goals than robots, i.e. $|\agents| \geq |\tasks|$, and that each robot is contained inside a ball centred at its reference position with radius $s > 0$. We then formulate the cooperative obstacle-avoiding path-planning problem for a scenario where the largest required travel distance of all assigned agents is to be minimised,
\begin{subequations}\label{eq:bottleneckPathPlanning}
\begin{align}
    \minimise \quad& \max_{(i,j)\in \ant} c(\path_{i,j}) \pi_{i,j} \\
    \suchthat \quad
    & \pi_{i,j} \in \{0,1\} & \forall (i,j) \in \ant, \label{eq:bottleneckPathPlanning_bin}\\
    &\sum_{i \in \agents} \pi_{i,j} = 1 & \forall j \in \tasks, \label{eq:bottleneckPathPlanning_tasks}\\
    &\sum_{j \in \tasks} \pi_{i,j} \leq 1 & \forall i \in \agents, \label{eq:bottleneckPathPlanning_agents}\\
    & \path_{i,j}\in \ijPath \cap \pathSet^s_{\free} & \forall (i,j) \in \ant.
\end{align}
\end{subequations}
The decision variables in \eqref{eq:bottleneckPathPlanning} are the assignment $\Pi := (\pi_{i,j})_{(i,j\in\ant)}$ and the path functions $(\path_{i,j})_{(i,j\in\ant)}$.

\subsection{Challenge of Finding Optimal Assignment}\label{sec:challenge}
If the length of the shortest robot-goal path with $s$-clearance in $\free$, denoted $c(\path_{i,j}^s)$, where 
\begin{align}\label{eq:optimalPaths}
    \path_{i,j}^{s} \in \argmin_{\path \in \ijPath \cap \pathSet^s_{\free}} c(\path),
\end{align} 
was given for all $(i,j) \in \ant$, then \eqref{eq:bottleneckPathPlanning} would reduce to an optimisation over the binary decision variables defining the assignment $\Pi$. This problem is the well-known \ac{bap}, see \cite{Burkard2012Book}, and is equivalent to a search for an optimal matching in a bipartite graph with weights $W = (c(\path_{i,j}^s))_{(i,j) \in \ant}$. Let $\bottAssign(W)$ be an operator that returns the set of optimal assignments that solve the \ac{bap}.
We note that for known weights the \ac{bap} can be solved efficiently, see \cite{Burkard2012Book}. 
However, the shortest paths, defined in \eqref{eq:optimalPaths}, are not known a-priori and are hard to find. There exist methods that estimate the shortest paths with finite complexity, e.g. optimisation with a convex approximation of the constraints. Some methods converge to the optimum as the complexity of the approach approaches infinity, e.g. \ac{prm}* \cite{Karaman2011IJRR}. While the true shortest paths are not found in polynomial time, the paths returned by the algorithms with finite complexity may provide sufficiently good estimates to obtain an optimal assignment. 

We therefore would like to know how much uncertainty in the path lengths can tolerated for each robot-goal pair in order for a considered assignment to be optimal. This corresponds to quantifying allowable perturbations to nominal weights that are given by path-length estimates.
\begin{defn}[Allowable perturbation]\label{defn:allowablePerturbations}
Given a family of weights, $W = (w_{i,j})_{(i,j)\in\agents\times\tasks}$, with $w_{i,j} \in \R$, let $\Pi \in \bottAssign(W)$ be a bottleneck assignment. A family of perturbations $V=(v_{i,j})_{(i,j)\in\agents\times\tasks}$, with $v_{i,j} \in \R$, is allowable with respect to $\Pi$ for $W$ if $\Pi \in \bottAssign(W + V)$, where $W + V = (w_{i,j} + v_{i,j})_{(i,j)\in\agents\times\tasks}$. Let $\Lambda=([-\lambdal_{i,j},\lambdau_{i,j}])_{(i,j)\in\agents\times\tasks}$ be a family of intervals. A family of perturbations, $V = (v_{i,j})_{(i,j) \in \ant}$, is contained in $\Lambda$, denoted $V \in \Lambda$, if for all $(i,j) \in \ant$ we have $v_{i,j} \in [-\lambdal_{i,j}, \lambdau_{i,j}]$. The family of intervals $\Lambda$ is allowable relative to assignment $\Pi$ for weights $W$ if for all perturbations $V \in \Lambda$ it is guaranteed that $V$ is an allowable perturbation with respect to $\Pi$ for $W$.
\end{defn}

\subsection{Certification Algorithm}\label{sec:certification}
Algorithm~\ref{alg:approach} outlines the approach we follow to obtain an assignment. Given a considered accuracy, parameterised by $n$, the algorithm involves computing families of upper and lower bounds on shortest path lengths for every robot-goal pair with subroutines ${\tt Lower}$ and ${\tt Upper}$, respectively. The averages of the upper and lower bounds are used as assignment weights to determine a candidate assignment with subroutine ${\tt BottleneckAssignment}$. To determine whether the returned assignment is guaranteed to be optimal for the true shortest paths a subroutine ${\tt AllowableIntervals}$ is executed. If for all agent-task pairs the upper and lower bounds lie within the range of perturbed weights defined by the allowable intervals, the algorithm terminates with a certificate, $Q={\tt true}$. Otherwise, the accuracy is increased and the procedure is repeated until either a certificate is found or the maximal complexity defined by $\nmax$ is reached.

\SetKwComment{Comment}{/* }{ */}
\begin{algorithm}

\DontPrintSemicolon
\caption{Iterative planning and assignment}\label{alg:approach}
\KwData{Agent~positions~$P = (p_i)_{i \in \agents}$, goal~positions~$G = (g_j)_{j \in \tasks}$, free~space~$\free$, safety~distance~$s$;}
\KwResult{Assignment~$\Pi$, certificate~$Q$;}
\KwParameters{Initial~planning~parameter~$\nmin \in \N$, planning~parameter~limit~$\nmax\in\N$, increase~factor~$\alpha > 0$;}
 $Q \gets {\tt false}$\;
 $n \gets \nmin$\;
 \While{$Q = {\tt false}$ and $n \leq \nmax$}
 {
     $
     (u_{i,j})_{(i,j) \in \ant}  
     \gets {\tt Upper}(P,G,\free,s,n)$\;
     $ 
     (l_{i,j})_{(i,j) \in \ant} 
     \gets {\tt Lower}(P,G,\free,s,n)$\;
     $W \gets  \left(\frac{l_{i,j} + u_{i,j}}{2}\right)_{(i,j) \in \ant}$ \;  
     $\Pi \gets {\tt BottleneckAssignment}\left(W\right)$\;
     $ 
     \Lambda \gets {\tt AllowableIntervals}(W,\Pi)$\;
     \eIf{$\left(\left[\frac{l_{i,j}-u_{i,j}}{2},\frac{u_{i,j}-l_{i,j}}{2}\right]\right)_{(i,j) \in \ant} \subseteq \Lambda$}
     {
     $Q \gets {\tt true}$\;
     }
     {
     $n \gets \alpha \cdot n$\;
     }
 }
\end{algorithm}

Several methods of obtaining upper and lower bounds for the optimal costs, $(c(\sigma_{i,j}^s))_{(i,j) \in \ant}$, are conceivable. Such bounds should converge as the planning parameter $n$ is increased. In Section~\ref{sec:samplingBasedPlanning} we focus on a sampling-based method where the planning parameter is given by the sampling size $n$. The requirements for a generalisation to other methods are summarised in the following. 
\begin{req}\label{req:UpperConverges}\label{req:LowerConverges}\label{req:BoundsConverges}
The complexities of ${\tt Upper}$ and ${\tt Lower}$ are polynomial in the number of robots and planning parameter $n$. Furthermore, $\lim_{n \rightarrow \infty} u_{i,j} = \lim_{n \rightarrow \infty} l_{i,j} = c(\sigma_{i,j}^s)$ while $u_{i,j} \geq c(\sigma_{i,j}^s)$ and $l_{i,j} \leq c(\sigma_{i,j}^s)$ for all $\ant$.
\end{req}

 We use an edge removal algorithm introduced in~\cite{Khoo2023TAC} to implement ${\tt BottleneckAssignment}$ but any other approach to solve a \ac{bap}, see \cite{Burkard2012Book}, can be used. To implement ${\tt AllowableIntervals}$ we use the method derived in \cite{Michael2022EJOR}. This method performs a sensitivity analysis to determine how much every weight can independently be altered while guaranteeing that a considered optimising assignment remains optimal. As an output, it produces the lexicographic maximal family of intervals that are allowable relative to a given assignment and weights according to Definition~\ref{defn:allowablePerturbations}. 
 
\subsection{Theoretical Guarantees}\label{sec:guarantees}
\begin{assum}\label{assum:existsPath}
For safety distance, $s>0$, obstacle map, $\obst\subset \cspace$, all robots $i \in \agents$, and all goals $j \in \tasks$ there exists a connecting path with $s$-clearance, as defined in Definitions~\ref{defn:agentTaskPath} and~\ref{defn:deltaClearance}, i.e. the set $\ijPath \cap \pathSet^s_{\free}$ is non-empty. 
\end{assum} 
\begin{thm}
\label{thm:main}
Given Assumption~\ref{assum:existsPath}, assume Requirements~\ref{req:BoundsConverges} is satisfied.
If Algorithm~\ref{alg:approach} terminates with $Q = {\tt true}$, then the returned assignment, $\Pi$, is bottleneck minimising for the shortest paths between robots and goals, i.e., $\Pi^* = \Pi$ is an optimiser of \eqref{eq:bottleneckPathPlanning}.  
\end{thm}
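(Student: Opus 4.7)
The plan is to reduce the certified termination of Algorithm~\ref{alg:approach} to an instance of Definition~\ref{defn:allowableIntervals} and then show that bottleneck optimality for the true shortest-path weights coincides with optimality of problem~\eqref{eq:bottleneckPathPlanning}. Let $W^{*} = (c(\sigma_{i,j}^{s}))_{(i,j)\in\ant}$ denote the family of true optimal weights and let $W$ be the family of averaged weights constructed in the algorithm when termination occurs. The first step is to introduce the perturbation $V = W^{*} - W$, i.e.\ $v_{i,j} = c(\sigma_{i,j}^{s}) - \tfrac{l_{i,j}+u_{i,j}}{2}$, so that $W + V = W^{*}$.

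Next, I would invoke Requirement~\ref{req:BoundsConverges} to sandwich each $c(\sigma_{i,j}^{s})$ in $[l_{i,j}, u_{i,j}]$. A short calculation then shows $v_{i,j} \in \left[\tfrac{l_{i,j}-u_{i,j}}{2}, \tfrac{u_{i,j}-l_{i,j}}{2}\right]$ for every $(i,j)\in\ant$; hence $V$ is contained in the interval family tested in the \textbf{if}-clause of Algorithm~\ref{alg:approach}. Because termination with $Q = \texttt{true}$ forces this family to be a subfamily of $\Lambda$, we conclude $V \in \Lambda$. Requirement~\ref{req:allowable} guarantees that $\Lambda$ is allowable with respect to $\Pi$ for $W$, so Definition~\ref{defn:allowableIntervals} applied to $V$ yields $\Pi \in \bottAssign(W + V) = \bottAssign(W^{*})$. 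Thus $\Pi$ is a bottleneck minimiser of the \ac{bap} with the true shortest-path weights.

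It remains to connect this \ac{bap} optimality back to \eqref{eq:bottleneckPathPlanning}. For any feasible assignment $\Pi$ of \eqref{eq:bottleneckPathPlanning}, the inner minimisation over paths $\sigma_{i,j}\in\ijPath\cap\pathSet^{s}_{\free}$ is decoupled across $(i,j)$ and, by \eqref{eq:optimalPaths} together with Assumption~\ref{assum:existsPath}, is attained at $\sigma_{i,j}^{s}$ with value $c(\sigma_{i,j}^{s})$. Consequently \eqref{eq:bottleneckPathPlanning} collapses to the \ac{bap} with weights $W^{*}$, so any $\Pi \in \bottAssign(W^{*})$, coupled with the paths $\sigma_{i,j}^{s}$, is an optimiser of \eqref{eq:bottleneckPathPlanning}. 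Applying this to the $\Pi$ produced by the algorithm finishes the argument.

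The main obstacle I anticipate is not technical depth but precision: one has to be careful that the allowability property in Definition~\ref{defn:allowableIntervals} is stated with respect to the \emph{averaged} weights $W$ used internally by the algorithm, not with respect to $W^{*}$, and that the symmetric error interval $[(l_{i,j}-u_{i,j})/2,(u_{i,j}-l_{i,j})/2]$ is exactly the right object to certify membership of $V$ in $\Lambda$. Once this bookkeeping is done, Requirements~\ref{req:BoundsConverges} and~\ref{req:allowable} do the rest of the work.
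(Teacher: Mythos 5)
Your proposal is correct and follows essentially the same route as the paper's proof: sandwich the true costs via Requirement~\ref{req:BoundsConverges}, identify the perturbation $V = W^{*} - W$ as lying in the symmetric interval family tested on Line~9, and use Requirement~\ref{req:allowable} with Definitions~\ref{defn:allowablePerturbations} and~\ref{defn:allowableIntervals} to conclude $\Pi \in \bottAssign(W^{*})$. Your final paragraph, reducing \eqref{eq:bottleneckPathPlanning} to the \ac{bap} with weights $W^{*}$, is a step the paper handles only implicitly (via the discussion around \eqref{eq:optimalPaths} in Section~\ref{sec:challenge}), so making it explicit is a welcome bit of extra care rather than a deviation.
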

\begin{proof}
Consider an arbitrary iteration of the \textbf{while}-loop given in Lines 3-12 of Algorithm~\ref{alg:approach}. Because Requirement~\ref{req:BoundsConverges} is satisfied, we have $(c(\path^s_{i,j}))_{(i,j) \in \ant} \in ([l_{i,j},u_{i,j}])_{(i,j) \in \ant}$, where $(u_{i,j})_{(i,j) \in \ant}$ and $(l_{i,j})_{(i,j) \in \ant}$ are determined in Lines~4 and~5. In Line~7, $\Pi$ is determined such that it is a bottleneck assignment for weights $W = (w_{i,j})_{(i,j) \in \ant}$, with $w_{i,j} = \frac{1}{2}(l_{i,j} + u_{i,j})$ according to Line~6. From \cite{Michael2022EJOR} and Definition~\ref{defn:allowablePerturbations}, we know that $\Pi \in \bottAssign(W + V)$ for all $V \subseteq \Lambda$, where $\Lambda$ is an allowable family of intervals determined in Line~8. If the algorithm returns $Q={\tt true}$, it means that the condition on Line~9 is satisfied and the difference between the weights and the true shortest paths corresponds to an allowable perturbation. Then, we have that $\Pi \in \bottAssign((c(\path^s_{i,j}))_{(i,j)\in\ant})$.
\end{proof}

From Theorem~\ref{thm:main} we know that Algorithm~\ref{alg:approach} returning a certificate is a sufficient condition for assignment optimality. Note that given a finite complexity limit, parameterised by $\nmax$, the algorithm may terminate with $Q = {\tt false}$ in which case no statement about the assignment optimality can be made. The individual subroutines of the algorithm have been shown to have polynomial complexity. Next, we show that if a significantly large increase in the value of the planning parameter from one iteration to the next is chosen, then the complete algorithm terminates in polynomial time.
\begin{prop}
\label{prop:polynomialComplexity}
Given Assumption~\ref{assum:existsPath}, assume Requirements~\ref{req:BoundsConverges} is satisfied. If the increase factor is such that 
\begin{align}\label{eq:minIncrease}
    \alpha > \left(\frac{\nmax}{\nmin}\right) ^ {\frac{1}{\nmax}},
\end{align} then Algorithm~1 has a computational complexity that is polynomial in parameter $\nmax$ and the number of robots $|\agents|$.
\end{prop}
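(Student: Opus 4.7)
The plan is to separate the analysis into two parts: bounding the number of times the while loop of Algorithm~\ref{alg:approach} can execute and bounding the cost of a single iteration, then multiplying the two. Both bounds will be polynomial in $\nmax$ and $|\agents|$ under Requirements~\ref{req:BoundsConverges} and~\ref{req:allowable} together with the lower bound on $\alpha$.

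First I would bound the iteration count. The complexity parameter starts at $\nmin$ and, whenever the body runs without producing a certificate, is multiplied by $\alpha$ on Line~12. Hence on the $k$-th pass through the body the value of $n$ is $\alpha^{k-1}\nmin$, and because the guard on Line~3 requires $n \leq \nmax$ the loop runs at most for the smallest integer $k^\star$ with $\alpha^{k^\star-1}\nmin \leq \nmax$, equivalently $k^\star \leq 1 + \log(\nmax/\nmin)/\log\alpha$. The hypothesis \eqref{eq:minIncrease} is equivalent to $\log\alpha > \log(\nmax/\nmin)/\nmax$, which substituted into the previous inequality yields $k^\star \leq 1 + \nmax$. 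So the loop executes $O(\nmax)$ times.

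Next I would verify that the cost of one pass through the body is polynomial in $\nmax$ and $|\agents|$. Since $n \leq \nmax$ in every iteration, the calls to ${\tt Upper}$ and ${\tt Lower}$ on Lines~4--5 run in time polynomial in $\nmax$ and $|\agents|$ by Requirement~\ref{req:BoundsConverges}. The weight averaging on Line~6 and the interval containment check on Line~9 touch every agent-task pair once and are $O(|\agents|\,|\tasks|)$ with $|\tasks|\leq|\agents|$. As noted after Requirement~\ref{req:BoundsConverges}, ${\tt BottleneckAssignment}$ can be implemented with any polynomial-time \ac{bap} solver, e.g. the edge removal procedure of \cite{Khoo2022TAC}. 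Finally, ${\tt AllowableIntervals}$ is polynomial in $|\agents|$ by Requirement~\ref{req:allowable}. Summing, each iteration costs a polynomial in $\nmax$ and $|\agents|$.

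Multiplying the $O(\nmax)$ iteration bound by this per-iteration polynomial bound yields the claim. The only nontrivial step is the logarithmic manipulation relating the lower bound \eqref{eq:minIncrease} on $\alpha$ to the iteration count; all other pieces are inherited directly from Requirements~\ref{req:BoundsConverges} and~\ref{req:allowable} and the standard complexity of \ac{bap} solvers.
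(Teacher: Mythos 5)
Your proof is correct and follows essentially the same route as the paper's: bound the number of while-loop iterations by $O(\nmax)$ using \eqref{eq:minIncrease}, then bound the per-iteration cost by the polynomial complexities of ${\tt Upper}$, ${\tt Lower}$, ${\tt BottleneckAssignment}$, and ${\tt AllowableIntervals}$ guaranteed by Requirements~\ref{req:BoundsConverges} and~\ref{req:allowable} and the cited \ac{bap} literature. The only difference is that you spell out the logarithmic derivation of the iteration bound, which the paper states without detail.
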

\begin{proof}
    For a given parameter $n$ and at most $|\agents|$ tasks, let $C_U(|\agents|,n)$ denote the complexity of ${\tt Upper}$, $C_L(|\agents|,n)$ denote the complexity of ${\tt Lower}$, $C_B(|\agents|)$ denote the complexity of the bottleneck assignment, and $C_A(|\agents|)$ denote the complexity of computing the allowable intervals. If \eqref{eq:minIncrease} is satisfied, there will be at most $\nmax$ iterations of the \textbf{while}-loop on Lines~3-12 of Algorithm~\ref{alg:approach}. Therefore, the worst-case complexity is $\bigO(\nmax(C_U(|\agents|,\nmax) + C_L(|\agents|,\nmax) + C_B(|\agents|) + C_A(|\agents|)))$. We know that $C_B(|\agents|)$ and $C_A(|\agents|)$ are polynomial in $|\agents|$ from \cite{Pentico2007EJoOR} and \cite{Michael2022EJOR}, respectively. Given the satisfaction of Requirement~\ref{req:BoundsConverges}, it follows that the algorithm is polynomial in $|\agents|$ and $\nmax$. 
\end{proof}

\section{Sampling-Based Path Planning}\label{sec:samplingBasedPlanning}

\subsection{Background on Sampled Roadmaps}\label{sec:roadmapBackground}
Sampling-based planning consists of finding paths between desired starting and goal points by connecting samples of the configuration space. A popular approach that is suited for multi-query path planning, e.g. the planning for multiple robots and goals simultaneously, consists of building a so-called roadmap. A roadmap is a graph with vertices representing points in a desired set, $\mathcal{C}\subseteq\cspace$, and edges representing lines contained in $\mathcal{C}$ that connect such points.  The roadmap-building approach outlined in Algorithm~\ref{alg:roadmap} (closely related to sPRM in \cite{Karaman2011IJRR} and gPRM in \cite{Janson2018IJRR}) relies on the following subroutines: ${\tt Sample}$ first generates a set of $n \in \N$ samples of the configuration space, $\mathcal{N}:=\{x_1,\dots x_n\} \subset \cspace$ and then returns the subset of these samples that are in $\mathcal{C}$, i.e, $S = \mathcal{N} \cap \mathcal{C}$; ${\tt Near}$ returns the set of nodes within a radius $r$ of node $v$, i.e, $\cspace_{\textrm{Near}} = \{x \in \vertices \setminus \{v\} \,|\, \|x - v\|_2 < r\}$; ${\tt UninterruptedEdge}$ returns ${\tt true}$ if and only if the linear interpolation between nodes $u$ and $v$ lies entirely in the sample space $\mathcal{C}$. For details, we refer to the discussion of the function ${\tt CollionFree}$ in \cite{Karaman2011IJRR}. 

\begin{algorithm}
\DontPrintSemicolon
\caption{${\tt RoadMap}$}\label{alg:roadmap}
\KwData{Agent~positions~$P = (p_i)_{i \in \agents} $, goal~positions~$G = (g_j)_{j \in \tasks}$, sample~space~$\mathcal{C}$, connection~radius~$r$, sample~size~$n$;}
\KwResult{Rode-map~nodes~$\vertices$, roadmap~edges~$\edges$;}
$\mathcal{S} \gets {\tt Sample}(\mathcal{C},n)$\;
$\vertices \gets \mathcal{S} \cup P \cup G $\;
$\edges \gets \emptyset$\;
\For {$v \in \vertices$}
{
$\cspace_{\textrm{Near}} \gets {\tt Near}(\vertices,v,r)$\;
\For {$x \in \cspace_{\textrm{Near}}$}
{
\If {${\tt UninterruptedEdge}(\mathcal{C},v,x)$}
{
$\edges \gets \edges \cup \{(v,x)\} \cup \{(x,v)\}$
}
}
}
\end{algorithm}

If the graph returned by Algorithm~\ref{alg:roadmap} connects nodes $p_i$ and $g_j$, then a routine for finding the shortest path in a graph, such as Dijkstra's 
algorithm, can be applied. The resulting path remains in $\mathcal{C}$ and connects robot $i \in \agents$ and goal $j \in \tasks$, i.e. $\hat{\path}_{i,j} \gets {\tt ShortestPath}(p_i,g_j,\vertices,\edges)$, where $\hat{\path}_{i,j} \in \ijPath \cap \pathSet^0_{\mathcal{C}}$. Crucially, this procedure can be run in polynomial time. \begin{rem}[Shown in \cite{Karaman2011IJRR}]\label{rem:complexityRoadmap}
The computational complexities of ${\tt RoadMap}$ and one query of Dijkstra's algorithm in the resulting graph are each $\bigO(n^2)$.
\end{rem}

In \cite{Janson2018IJRR} conditions for feasibility, see Remark~\ref{rem:minRadius}, and accuracy, see Theorem~\ref{thm:optimalityBound}, of the roadmap-based path planning were derived as functions of the sample dispersion.  
\begin{defn}[$l_2$-dispersion]\label{defn:dispersion}
For a finite non-empty set $\mathcal{S} \subset \cspace$, the dispersion of $\mathcal{S}$ in the compact set $\mathcal{C} \subset \cspace$, with positive Lebesgue measure, is 
\begin{align*}
    D(\mathcal{C},\mathcal{S}) := \sup_{c \in \mathcal{C}} \min_{s \in \mathcal{S}} \|s - c\|_2.
\end{align*}
\end{defn}
The dispersion of the node set generated in Line~2 of Algorithm~\ref{alg:roadmap}, i.e. $D(\mathcal{C},\vertices)$, can be described as the radius of the largest ball in $\mathcal{C}$ that does not contain a node $v \in \vertices$.
\begin{rem}[Shown in \cite{Janson2018IJRR}]\label{rem:minRadius}
If the connection radius in Algorithm~\ref{alg:roadmap} is selected such that $r > 2 D(\mathcal{C},\vertices)$, then the graph returned by Algorithm~\ref{alg:roadmap} not connecting $p_i$ and $g_j$ means that there does not exist a path for robot $i \in \agents$ and goal $j \in \tasks$ with $\delta$-clearance, for any $\delta \geq 2 D(\mathcal{C},\vertices)$.    
\end{rem}
The implication of this statement is that the dispersion should be small in relation to the connection radius $r$ and a parameter $\delta$ that quantifies a desired minimum distance from the boundary of the sample set $\mathcal{C}$. 

\begin{assum}\label{assum:rightRadius}
The connection radius, $r$, in Algorithm~\ref{alg:roadmap}, is selected such that $r \in \left(2 D(\mathcal{C},\vertices), \delta - D(\mathcal{C},\vertices)\right)$.
\end{assum}
We note that for Assumption~\ref{assum:rightRadius} to be satisfied, we must have $\delta > 3 D(\mathcal{C},V)$.
\begin{thm}[proven in \cite{Janson2018IJRR}]\label{thm:optimalityBound}
For a given margin, $\delta > 0$, assume there exists a path, $\path \in \ijPath \cap \pathSet^\delta_{\mathcal{C}}$, connecting robot $i \in \agents$ and goal $j\in \tasks$ with $\delta$-clearance as defined in Definitions~\ref{defn:agentTaskPath} and~\ref{defn:deltaClearance}. Let $c(\hat{\path}_{i,j})$ be the length of the path between $p_i$ and $g_j$ returned by ${\tt shortestPath}(p_i,g_j,\vertices,\edges)$, where $(\vertices,\edges)$ is the graph obtained by Algorithm~\ref{alg:roadmap} on samples with $l_2$-dispersion of $D(\mathcal{C},\vertices)$ and a connection radius, $r$, that satisfies Assumption~\ref{assum:rightRadius}. Then, we have
\begin{align*}
    c(\hat{\path}_{i,j}) \leq \left(1 + \frac{2D(\mathcal{C},\vertices)}{r - 2D(\mathcal{C},\vertices)}\right) \min_{\path \in \ijPath \cap \pathSet^\delta_{\mathcal{C}}} c(\path). 
\end{align*} 
\end{thm}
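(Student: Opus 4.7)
The plan is to shadow an optimal $\delta$-clearance trajectory by a polyline in the roadmap and then control its length with triangle-inequality bookkeeping. Let $\sigma^*$ attain the minimum on the right-hand side, set $L^* = c(\sigma^*)$, and abbreviate $D = D(\mathcal{C}, \mathcal{V})$. I would start by partitioning $\sigma^*$ by arc length into $K$ equal sub-arcs of length $\ell = L^*/K$, with $K$ chosen so that $\ell < r - 2D$, yielding waypoints $y_0 = p_i, y_1, \ldots, y_K = g_j$ on $\sigma^*$.

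Next I would lift each waypoint to a roadmap node: $v_0 = p_i$ and $v_K = g_j$ are in $\mathcal{V}$ by Line~2 of Algorithm~\ref{alg:roadmap}, and for each interior $y_k$ Definition~\ref{defn:dispersion} supplies some $v_k \in \mathcal{V}$ with $\|y_k - v_k\| \leq D$. To see that $(v_k, v_{k+1})$ is an edge of the roadmap, the radius check inside ${\tt Near}$ follows from the triangle inequality
\begin{align*}
\|v_k - v_{k+1}\| \leq \|v_k - y_k\| + \|y_k - y_{k+1}\| + \|y_{k+1} - v_{k+1}\| \leq 2D + \ell < r,
\end{align*}
and the ${\tt UninterruptedEdge}$ check follows by noting that every point $z$ on $\overline{v_k v_{k+1}}$ satisfies $\|z - y_k\| \leq \|z - v_k\| + \|v_k - y_k\| < r + D < \delta$, where the last inequality uses Assumption~\ref{assum:rightRadius}. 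Since $y_k \in \mathcal{C}^\delta$ by the $\delta$-clearance of $\sigma^*$, the definition of the $\delta$-interior places $z$ in $\mathcal{C}$.

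For the length bound, the first and last edges have only a single perturbed endpoint (because $v_0 = y_0$ and $v_K = y_K$ exactly), while the interior edges have two, giving
\begin{align*}
c(\hat{\sigma}_{i,j}) \leq 2(D + \ell) + (K - 2)(2D + \ell) = K(2D + \ell) - 2D.
\end{align*}
Using $K \ell = L^*$ this rearranges to $L^* + 2D(K - 1)$, and choosing $K$ just above $L^*/(r - 2D)$ so that $\ell \uparrow r - 2D$ yields $c(\hat{\sigma}_{i,j}) \leq L^* \cdot r/(r - 2D) = L^*\bigl(1 + 2D/(r - 2D)\bigr)$, as required.

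The main obstacle I expect is the geometric check that each chord $\overline{v_k v_{k+1}}$ stays inside $\mathcal{C}$: the estimate $\|z - y_k\| < r + D$ is the tight consequence of placing one sample near each waypoint, and it is precisely Assumption~\ref{assum:rightRadius}'s upper endpoint $r < \delta - D$ that certifies $z \in \mathcal{C}$ via the $\delta$-interior. The discretization accounting is routine once one exploits that $p_i$ and $g_j$ contribute zero perturbation at the boundary; without this observation an extra additive $O(D)$ term would appear and the multiplicative constant $1 + 2D/(r-2D)$ could not be recovered cleanly.
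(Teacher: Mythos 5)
The paper gives no proof of this theorem---it is imported verbatim from \cite{Janson2018IJRR}---and your argument is a correct, self-contained reconstruction of exactly the dispersion-based proof used there: discretise the optimal $\delta$-clear path into sub-arcs shorter than $r-2D$, snap each waypoint to a sample within the dispersion radius, verify the connection-radius and collision checks via the two endpoints of the interval in Assumption~\ref{assum:rightRadius}, and recover the multiplicative constant through the endpoint bookkeeping. The one detail worth noting is that you correctly identify the necessity of the zero-perturbation boundary terms at $p_i$ and $g_j$; without them the required $K>L^*/(r-2D)$ and the needed $K\le L^*/(r-2D)$ would be incompatible, so this is not merely cosmetic.
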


The sample dispersion depends on the sampling scheme applied in the subroutine ${\tt Sample}$. Intuitively, the more samples considered the smaller the dispersion is, i.e, $D(\mathcal{C},\vertices)$ decreases for increasing $n$. Given just the set of samples, it is difficult to determine the dispersion. However, if the samples are generated with a deterministic procedure, a bound on the dispersion, $\hat{D} \geq D(\mathcal{C},\vertices)$, can be computed. In the case where $\mathcal{C} = [0,1]^d$ and the samples are obtained by gridding with a cubic lattice with $n=k^d$ uniformly spaced grid points and $k \in \N$, the dispersion is bound by $\hat{D} = 0.5 d^{\frac{1}{2}} n^{-\frac{1}{d}}$, see \cite{Sukharev1971CMMP}. For the two-dimensional case, i.e., $\mathcal{C} = [0,1]^2$ the lowest possible dispersion bound of $\hat{D} = 0.62 n^{-\frac{1}{d}}$ is achieved with triangular tiling, see \cite{LaValle2006Book}. In probabilistic sampling-based motion planning algorithms, such as \ac{prm} and \ac{prm}$^*$ \cite{Karaman2011IJRR}, the samples are randomly drawn from a probability distribution. For random samples drawn from a known distribution, probabilistic properties of the dispersion can be expressed. The dispersion of $n$ independently uniformly sampled points on $[0,1] ^ d$ is $\bigO (\log(n)^{\frac{1}{d}}n^{-\frac{1}{d}})$ with probability 1, see \cite{Deheuvels1983ZWVG}. 

\subsection{Implementation of ${\tt Upper}$
}\label{sec:Upper}
To obtain upper bounds for the shortest paths with $s$-clearance, we apply the subroutine described in Algorithm~\ref{alg:roadmap} and generate a feasible path, $\pathmax_{i,j} \in \ijPath \cap \pathSet^s_{\free}$ for all $(i,j) \in \ant$ via a roadmap graph $(\overline{\vertices},\overline{\edges})$ created by sampling from $\mathcal{C} = \free^s$. Based on the properties discussed in Section~\ref{sec:roadmapBackground}, the connection radius, $r$, is selected such that it decreases with the dispersion bound, $\hat{D}$, for increasing sample sizes, $n$, such that Assumption~\ref{assum:rightRadius} is satisfied. If the dispersion is sufficiently small, paths are found for every robot-goal pair. The solid blue lines in Figure~\ref{fig:bound_example} illustrate examples of such paths for varying sample sizes. The sample set, $\free^s$, is represented by the white area that consists of all points that have a greater distance from all obstacles than $s$.

\begin{figure*}
    \begin{subfigure}{0.34580645161\linewidth}
        \centering
        \includegraphics[width=\linewidth]{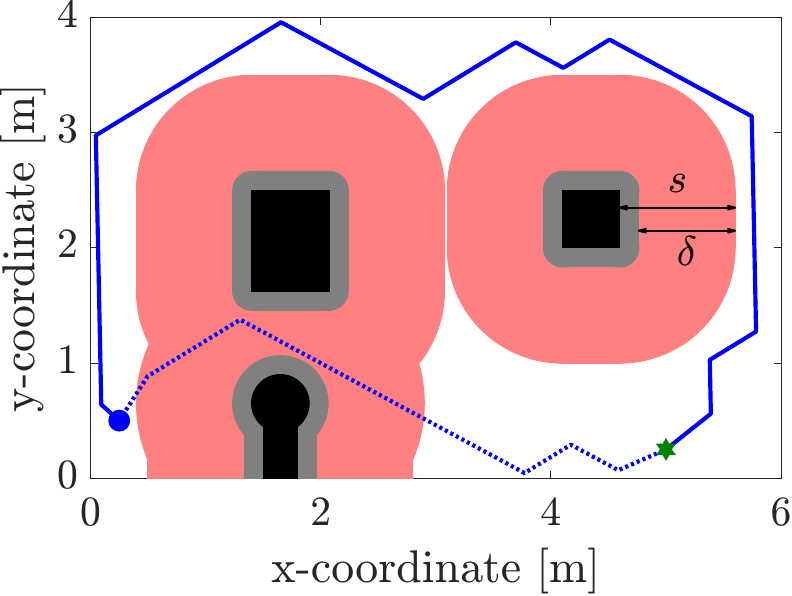}
        \caption{Sample size $n = 128$} 
        \label{fig:bound_example_1}
    \end{subfigure}
    \hfill
    \begin{subfigure}{0.32\linewidth}
        \centering
        \includegraphics[width=\linewidth]{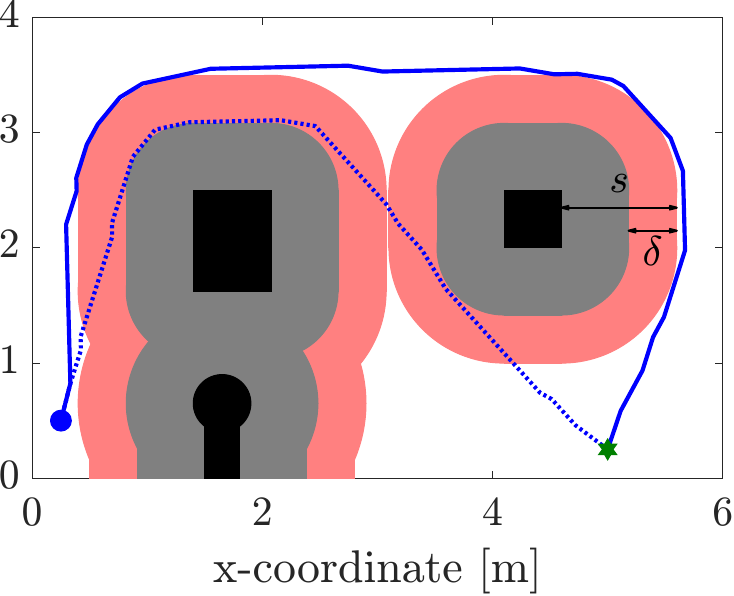}
        \caption{Sample size $n = 2048$} 
        \label{fig:bound_example_2}
    \end{subfigure}
    \hfill
    \begin{subfigure}{0.32\linewidth}
        \centering
        \includegraphics[width=\linewidth]{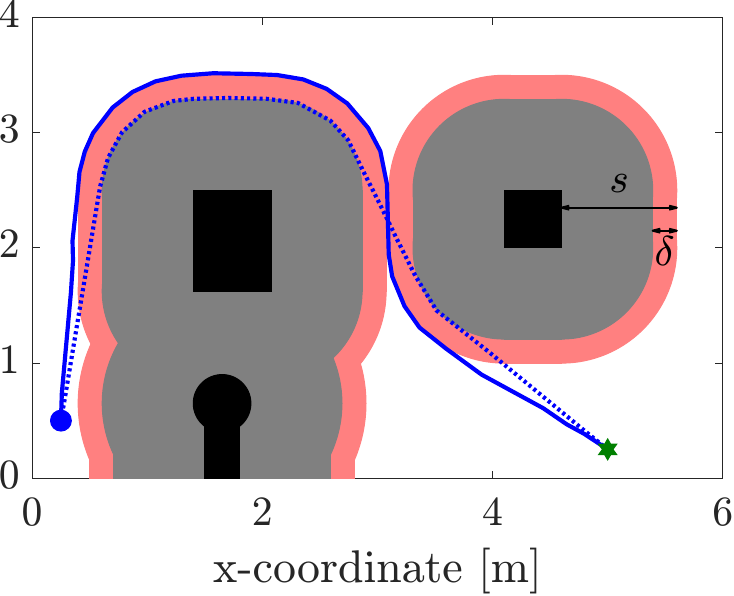}
        \caption{Sample size $n = 32768$} 
        \label{fig:bound_example_3}
    \end{subfigure}
    \caption{Shortest paths generated from roadmaps $(\overline{\vertices},\overline{\edges})$ [solid blue line] and $(\underline{\vertices},\underline{\edges})$ [dotted blue line] with robot position [blue dot], goal [green star], obstacles [black area], expansion of obstacles by $s$ [red area], and expansion of obstacles by $s - \delta$ [grey area], for parameter values $\zeta = 0.5$, $\eta = 2.2 \cdot 10^{-16}$, and samples obtained from triangular tiling.}  
    \label{fig:bound_example}
\end{figure*}

The arc lengths of the feasible paths are then used as upper bounds on the optimal path costs, i.e., $(u_{i,j})_{(i,j)\in \ant} = (c(\pathmax_{i,j}))_{(i,j)\in \ant} \geq (c(\sigma^s_{i,j}))_{(i,j)\in \ant}$. In \cite{Janson2018IJRR} it is shown, based on Theorem~\ref{thm:optimalityBound}, that these paths converge to the optimal paths asymptotically 
as $n \rightarrow \infty$. We know from Remark~\ref{rem:complexityRoadmap} that the complexity of finding these upper bounds is $\bigO(|\agents|^2 n^2)$. To fulfil the second half of Requirement~\ref{req:LowerConverges} we derive lower bounds next.   

\subsection{Implementation of ${\tt Lower}$}\label{sec:Lower}
To determine lower bounds on the shortest paths with $s$-clearance, we consider an alternative roadmap graph, $(\underline{\vertices},\underline{\edges})$. It is generated by expanding the space from which samples are taken to include some points that are outside of the $s$-interior of $\free$, i.e., in the $(s - \delta)$-interior of $\free$ with $\delta \in [0,s]$. In other words, we allow for samples that are closer to obstacles than $s$. We note that the path constructed from such samples may not be feasible. However, by regulating the amount of the extension of the sample space with parameter $\delta$, lower bounds on the shortest feasible paths are obtained. 
If the dispersion of the nodes in the roadmap has a known upper bound, $\hat{D} \geq D(\free^{s-\delta},\underline{\vertices})$, we can bound the optimal path lengths with the following corollary derived from Theorem~\ref{thm:optimalityBound}.

\begin{cor}\label{cor:lowerBound}
    If Assumptions~\ref{assum:existsPath} holds, the connection radius, $r$, is selected such that Assumption~\ref{assum:rightRadius} is satisfied for a margin, $\delta \in [0,s]$, then we have  
\begin{align}\label{eq:accuracyParameter}
    \left(1 - \frac{2 \hat{D}}{r}\right) c(\pathmin_{i,j}) \leq  c(\path_{i,j}^s) ,  
\end{align}
where $\pathmin_{i,j}$ is the shortest path in roadmap $(\underline{\vertices},\underline{\edges})$ connecting $p_i$ and $g_j$, the optimal path $\path^s_{i,j}$ is defined in \eqref{eq:optimalPaths}, and $\hat{D} \geq D(\free^{s-\delta},\underline{\vertices})$ is any upper bound on the sample dispersion. 
\end{cor}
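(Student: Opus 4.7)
The plan is to invoke Theorem~\ref{thm:optimalityBound} with sample set $\mathcal{C} = \free^{s-\delta}$ and clearance margin $\delta$, and then rearrange. The key structural observation is that although $\pathmin_{i,j}$ and $\path^s_{i,j}$ are defined relative to different ambient spaces ($\free^{s-\delta}$ and $\free^s$ respectively), the latter serves as a feasible comparison path in Theorem~\ref{thm:optimalityBound} applied to the former, which is exactly what produces the inequality in the opposite direction to that of the theorem.

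First I would establish the set inclusion $\free^s \subseteq (\free^{s-\delta})^\delta$. For any $x \in \free^s$ and any $y \in \cspace \setminus \free^{s-\delta}$, the negation of the defining condition of $\free^{s-\delta}$ yields an obstacle point $z \in \cspace \setminus \free$ with $\|y - z\|_2 < s - \delta$, while $\|x - z\|_2 \geq s$ because $x$ lies in the $s$-interior of $\free$. The triangle inequality then gives $\|x - y\|_2 > \delta$, so $\inf_{y \in \cspace \setminus \free^{s-\delta}} \|x - y\|_2 \geq \delta$, i.e., $x \in (\free^{s-\delta})^\delta$. As a consequence, every path in $\ijPath \cap \pathSet^s_{\free}$ lies in $\ijPath \cap \pathSet^\delta_{\free^{s-\delta}}$; in particular, the optimal $s$-clearance path $\path^s_{i,j}$ guaranteed by Assumption~\ref{assum:existsPath} has $\delta$-clearance in $\free^{s-\delta}$.

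Second, I would apply Theorem~\ref{thm:optimalityBound} with $\mathcal{C} = \free^{s-\delta}$ and node set $\underline{\vertices}$. The theorem applies because Assumption~\ref{assum:rightRadius} is assumed to hold for this choice, and because the inclusion above combined with Remark~\ref{rem:minRadius} ensures that the roadmap $(\underline{\vertices},\underline{\edges})$ connects $p_i$ and $g_j$, so that $\pathmin_{i,j}$ is well defined. Weakening the minimum in the bound of Theorem~\ref{thm:optimalityBound} from $\ijPath \cap \pathSet^\delta_{\free^{s-\delta}}$ to the smaller set $\ijPath \cap \pathSet^s_{\free}$ gives
\[
c(\pathmin_{i,j}) \leq \frac{r}{r - 2 D(\free^{s-\delta},\underline{\vertices})}\, c(\path^s_{i,j}),
\]
where I have rewritten $1 + \tfrac{2D}{r-2D}$ in closed form.

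Finally, since $D \mapsto r/(r - 2D)$ is increasing on $[0, r/2)$, substituting the upper bound $\hat{D} \geq D(\free^{s-\delta},\underline{\vertices})$ preserves the inequality and rearranging yields the claimed bound $(1 - 2\hat{D}/r)\, c(\pathmin_{i,j}) \leq c(\path^s_{i,j})$. The degenerate case $\hat{D} \geq r/2$ is trivial because then $1 - 2\hat{D}/r \leq 0$ while arc lengths are non-negative. I expect the only real obstacle to be the clearance-set inclusion $\free^s \subseteq (\free^{s-\delta})^\delta$ of the first step, since once that is in place the corollary follows from a direct invocation of Theorem~\ref{thm:optimalityBound} together with a monotonicity argument.
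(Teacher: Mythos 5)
Your proof is correct and follows exactly the derivation the paper intends (the paper states the corollary as ``derived from Theorem~\ref{thm:optimalityBound}'' without spelling out the argument): apply Theorem~\ref{thm:optimalityBound} with $\mathcal{C}=\free^{s-\delta}$, use the inclusion $\free^{s}\subseteq(\free^{s-\delta})^{\delta}$ so that $\path_{i,j}^{s}$ is a feasible comparison path with $\delta$-clearance, and rearrange $1+\tfrac{2D}{r-2D}=\tfrac{r}{r-2D}$ together with monotonicity in $D$ to pass to the bound $\hat{D}$. Your explicit triangle-inequality verification of the clearance-set inclusion is the one step the paper leaves implicit, and it is carried out correctly.
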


Algorithm~\ref{alg:Lower} is a specific procedure to implement ${\tt Lower}$. Given knowledge of the sampling scheme, see Section~\ref{sec:roadmapBackground}, a bound is computed in subroutine ${\tt DispersionBound}$ as a function of the sample size. 
If the dispersion bound is too large, i.e., $3 \hat{D} \geq s$, then Algorithm~\ref{alg:Lower} returns infinitely low lower bounds on the optimal path lengths. If the dispersion bound is sufficiently small, i.e., $3 \hat{D} < s$, then the shortest paths in the roadmap graph $(\underline{\vertices},\underline{\edges})$ are used to compute lower bounds on the optimal path lengths based on \eqref{eq:accuracyParameter}. Analogue to the procedure for ${\tt Upper}$ the connection radius is selected such that it decreases with the dispersion bounds as sample size $n$ increases. However, in the case of ${\tt Lower}$ the set from which the samples are drawn, $\free^{s-\delta}$, varies as it shrinks towards $\free^{s}$ with increasing $n$. The dotted blue lines in Figure~\ref{fig:bound_example} illustrate the path obtained from $(\underline{\vertices},\underline{\edges})$ for varying sample sizes. The sample set, $\free^{s-\delta}$, is the union of the white and red areas consisting of all points that have a greater distance from all obstacles than $s-\delta$.    

\begin{algorithm}
\DontPrintSemicolon
\caption{${\tt Lower}$}\label{alg:Lower}
\KwData{Initial~positions~$P = (p_i)_{i \in \agents} $, goal~positions~$G = (g_j)_{j \in \tasks}$, free~space~$\free$, safety~distance~$s$, sample~size~$n$;}
\KwResult{Lower~bounds~$L = (l_{i,j})_{(i,j) \in \ant}$;}
\KwParameters{Margin~tuning~parameter~$\zeta \in (0,1)$, radius~tuning~parameter~$\eta \in (0,1)$;}
$\hat{D} \gets {\tt DispersionBound}(\free,n)$\;
\eIf{$\hat{D} \geq \frac{s}{3}$ }
{
$(l_{i,j})_{(i,j) \in \ant} \gets (-\infty)_{(i,j) \in \ant}$\;
}
{
$\delta \gets (3\hat{D}) ^ {\zeta}  s ^ {1 - \zeta}$\;
$r \gets \eta 2 \hat{D} + (1 - \eta) (\delta -  \hat{D})$\;
$(\underline{\vertices},\underline{\edges}) \gets {\tt RoadMap}(P,G,\free^{s - \delta},r,n)$\;
$\beta \gets 1 - \frac{2\hat{D}}{r}$\;
\For{$(i,j) \in \ant$}
{
    $\pathmin_{i,j} \gets {\tt ShortestPath}(p_i,g_j,\underline{\vertices},\underline{\edges})$\;
    $l_{i,j} \gets \beta c(\pathmin_{i,j})$\;
}
}
\end{algorithm}

From Remark~\ref{rem:complexityRoadmap} we know that the complexity of Algorithm~\ref{alg:Lower} is also $\bigO(|\agents|^2 n^2)$. We conclude this section by showing in Lemma~\ref{lem:lowerBounds} that Requirement~\ref{req:LowerConverges} can be satisfied with this sampling-based implementation.

\begin{lem}
\label{lem:lowerBounds}
The values $(l_{i,j})_{(i,j)\in\ant}$ returned by Algorithm~\ref{alg:Lower} are lower bounds on the optimal paths $(c(\path_{i,j}^s))_{(i,j)\in\ant}$, defined in \eqref{eq:optimalPaths}, and converge to the optimal values if $\hat{D}\geq D(\free^{s-\delta},\underline{\vertices})$ and $\hat{D} \rightarrow 0$ as $n \rightarrow 0$.
\end{lem}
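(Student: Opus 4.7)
The proof splits into the lower-bound property and the convergence property. For the lower bound, I would first dispense with the trivial branch $\hat{D} \geq s/3$, where Algorithm~\ref{alg:Lower} returns $l_{i,j} = -\infty$ and the bound holds vacuously. In the non-trivial branch, I would verify that the parameter choices satisfy the hypotheses of Corollary~\ref{cor:lowerBound}. From $3\hat{D} < s$ and $\zeta \in (0,1)$, the weighted geometric mean $\delta = (3\hat{D})^\zeta s^{1-\zeta}$ lies strictly in $(3\hat{D}, s)$, and then $r = 2\eta\hat{D} + (1-\eta)(\delta - \hat{D})$ is a strict convex combination of $2\hat{D}$ and $\delta - \hat{D}$, hence lies in the open interval required by Assumption~\ref{assum:rightRadius}. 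The corollary then delivers $l_{i,j} = \beta\, c(\pathmin_{i,j}) \leq c(\path_{i,j}^s)$ directly.

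For the convergence part, I would first show $\beta \to 1$ as $\hat{D} \to 0$. Computing $\delta/\hat{D} = 3(s/(3\hat{D}))^{1-\zeta} \to \infty$, the term $(1-\eta)\delta$ dominates $r$ for small $\hat{D}$, so $\hat{D}/r \to 0$ and therefore $\beta = 1 - 2\hat{D}/r \to 1$. I would then squeeze $c(\pathmin_{i,j})$ around $c(\path_{i,j}^s)$. For the upper side, I would invoke Theorem~\ref{thm:optimalityBound} with $\mathcal{C} = \free^{s-\delta}$ and margin $\delta$: a short triangle-inequality check gives $\free^s \subseteq (\free^{s-\delta})^\delta$ (any point at distance at least $s$ from $\cspace \setminus \free$ is at distance at least $\delta$ from $\cspace \setminus \free^{s-\delta}$), so $\path_{i,j}^s$ is admissible in the theorem's minimisation, yielding $c(\pathmin_{i,j}) \leq \frac{1}{\beta} c(\path_{i,j}^s)$ and hence $\limsup c(\pathmin_{i,j}) \leq c(\path_{i,j}^s)$.

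For the matching $\liminf$ bound, I would note that $\pathmin_{i,j}$ lies in $\free^{s-\delta}$ by construction of the roadmap, so its length is bounded below by that of the shortest path with $(s-\delta)$-clearance in $\free$. Under standard regularity of the obstacle set, this optimum is continuous in the clearance parameter at $s$, so it converges to $c(\path_{i,j}^s)$ as $\delta \to 0$. Combining with the upper side gives $c(\pathmin_{i,j}) \to c(\path_{i,j}^s)$, and multiplying by $\beta \to 1$ delivers $l_{i,j} \to c(\path_{i,j}^s)$.

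The main obstacle is this last step: the continuity of the shortest $(s-\delta)$-clearance path length in $\delta$ is not explicitly stated as a hypothesis. Without a regularity condition such as piecewise-smooth obstacle boundaries with no measure-zero bottlenecks, one could contrive obstacle configurations where relaxing the clearance by any $\delta > 0$ opens up a strictly shorter route, keeping $c(\path_{i,j}^{s-\delta})$ bounded away from $c(\path_{i,j}^s)$; the squeeze would then fail. A secondary, routine verification is the inclusion $\free^s \subseteq (\free^{s-\delta})^\delta$ used in the upper side, which follows from the triangle inequality applied to distances to $\cspace \setminus \free$, but is essential because it is what allows the roadmap defined on the enlarged sample set to still witness the clearance-$s$ optimum.
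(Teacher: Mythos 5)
Your argument follows the paper's route for the bounding part: dismiss the $\hat{D}\geq s/3$ branch, check that the assignments of $\delta$ and $r$ in Algorithm~\ref{alg:Lower} place them strictly inside the window of Assumption~\ref{assum:rightRadius}, and invoke Corollary~\ref{cor:lowerBound}. Your triangle-inequality inclusion $\free^{s}\subseteq(\free^{s-\delta})^{\delta}$ is in fact needed already at that stage --- Remark~\ref{rem:minRadius} and Theorem~\ref{thm:optimalityBound} require clearance measured relative to the sample set $\free^{s-\delta}$, not relative to $\free$ --- and you are more explicit about this than the paper, which only remarks that $\free^{s}\subset\free^{s-\delta}$. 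For convergence the paper is terse (it asserts $\free^{s-\delta}\rightarrow\free^{s}$ and concludes $c(\pathmin_{i,j})\rightarrow c(\path^{s}_{i,j})$ without further justification), whereas you set up a proper squeeze; that is the right structure. The one point to correct is the obstacle you flag at the end: the map $\delta\mapsto\min_{\path\in\ijPath\cap\pathSet^{s-\delta}_{\free}}c(\path)$ is left-continuous at $\delta=0$ with no regularity of $\obst$ beyond what is already assumed ($\cspace$ compact, $\obst$ closed, Assumption~\ref{assum:existsPath}). Indeed, suppose for some $\epsilon>0$ and a sequence $\delta_k\downarrow 0$ there were paths $\path_k\in\ijPath\cap\pathSet^{s-\delta_k}_{\free}$ with $c(\path_k)\leq c(\path^{s}_{i,j})-\epsilon$. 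Reparametrising each $\path_k$ at constant speed makes the family uniformly Lipschitz with constant at most $c(\path^{s}_{i,j})$, so by Arzel\`a--Ascoli a subsequence converges uniformly to a path $\path^{*}\in\ijPath$; continuity of the distance to $\cspace\setminus\free$ gives $\path^{*}$ $s$-clearance, and lower semicontinuity of arc length (total variation) under uniform convergence gives $c(\path^{*})\leq c(\path^{s}_{i,j})-\epsilon$, contradicting the definition of $\path^{s}_{i,j}$ in \eqref{eq:optimalPaths}. So no assumption about bottleneck passages is needed and your squeeze closes; the hypothetical configuration you describe cannot occur in a compact configuration space. (The ``$n\rightarrow 0$'' in the lemma statement is a typo for $n\rightarrow\infty$; you and the paper both read it that way.)
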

\begin{proof}
If $s \leq 3 \hat{D}$, Algorithm~\ref{alg:Lower} returns $(l_{i,j})_{(i,j) \in \ant} = (-\infty)_{(i,j) \in \ant}$ and we have $l_{i,j} < c(\sigma^s_{i,j})$ for all $(i,j) \in \ant$. 
If $s > 3 \hat{D}$, then from Lines~5 and~6, we have $\delta > 3 \hat{D} \geq 3 D(\free^{s-\delta},\underline{\vertices})$ and $r > 2 \hat{D}(n) \geq 2 D(\free^{s-\delta},\underline{\vertices})$. We also have $r < \delta - \hat{D} \leq \delta - D(\free^{s-\delta},\underline{\vertices})$ and Assumption~\ref{assum:rightRadius} is therefore satisfied. From Assumption~\ref{assum:existsPath} and the fact that $\free^s \subset \free^{s-\delta}$ we know that there exists a path in $\pathSet^{s-\delta}_{\free}$ that connects $p_i$ and $g_j$ for all $(i,j) \in \ant$. From Remark~\ref{rem:minRadius} it follows that there exists a path in $(\underline{\vertices},\underline{\edges})$ that connects $p_i$ and $g_j$ for all $(i,j) \in \ant$. The path $\pathmin_{i,j}$ determined in Line~10 is the shortest path in $(\underline{\vertices},\underline{\edges})$ connecting $p_i$ and $g_j$. From Corollary~\ref{cor:lowerBound} it follows that $l_{i,j} =  (1-\frac{2\hat{D}}{r})c(\pathmin_{i,j}) \leq  c(\path^s_{i,j})$ for all $(i,j) \in \ant$.
Moreover, we have $\frac{\hat{D}}{r} \rightarrow 0 $ and $\delta \rightarrow 0$ as $n \rightarrow \infty$. Thus, $\free^{s-\delta} \rightarrow \free^s$ and for all $(i,j) \in \ant$ $c(\pathmin_{i,j}) \rightarrow c(\path_{i,j}^s)$ as $n \rightarrow \infty$.  
\end{proof}

\section{Numerical Analysis}\label{sec:caseStudy}

\subsection{Case Study}
\label{sec:results}
We consider agents, $\agents$, representing 5 ground robots. The positions of the robots $p_1, \dots p_5 \in \cspace$ are illustrated in Figure~\ref{fig:case_study_0} in blue. The configuration space contains obstacles, $\obst\subset\cspace$ shown in black, from which the robot positions must keep a safety distance of at least $s$, indicated in red.
The cooperative mission is to visit all 3 goals, $g_1,g_2,g_3 \in \cspace$ shown in green, that represent the task set, $\tasks$. 

\begin{figure*}
    \centering
    \begin{subfigure}{0.25875\linewidth}
        \centering
        \includegraphics[width=\linewidth]{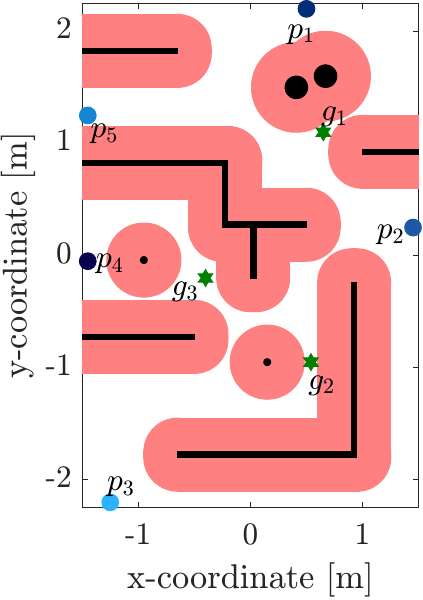}
        \caption{Initial configuration} 
        \label{fig:case_study_0}
    \end{subfigure}
    \hfill
    \begin{subfigure}{0.23\linewidth}
        \centering
        \includegraphics[width=\linewidth]{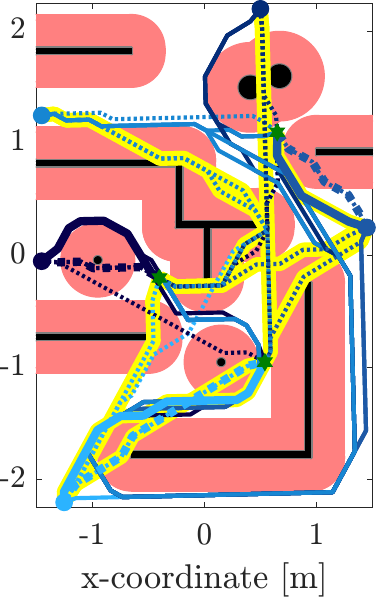}
        \caption{Iteration 1 ($n = 1024$)}
        \label{fig:case_study_1}
    \end{subfigure}
    \hfill
    \begin{subfigure}{0.23\linewidth}
        \centering
        \includegraphics[width=\linewidth]{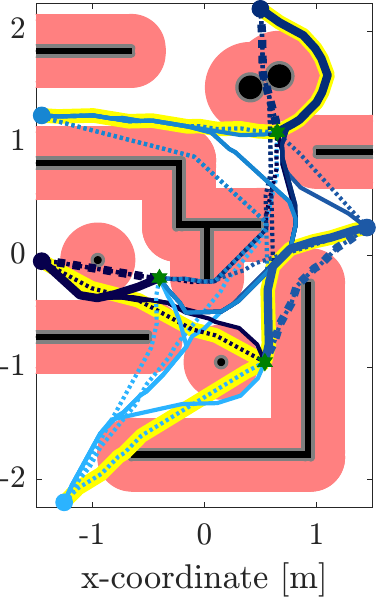}
        \caption{Iteration 2 ($n = 4168$)}
        \label{fig:case_study_2}
    \end{subfigure}
    \hfill
    \begin{subfigure}{0.23\linewidth}
        \centering
        \includegraphics[width=\linewidth]{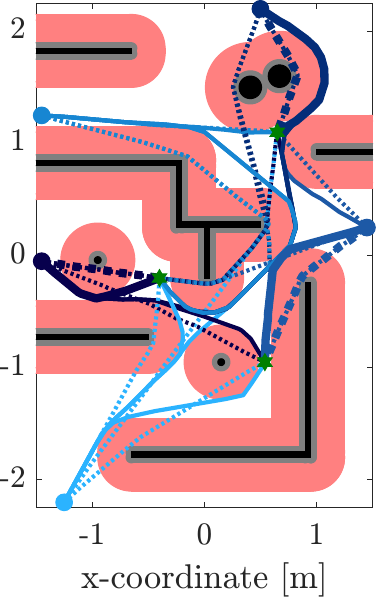}
        \caption{Iteration 3 ($n = 16672$)}
        \label{fig:case_study_3}
    \end{subfigure}
    \caption{Iterative multi-query path planning with robot positions [blue dots], goals [green stars], obstacles [black area], expansion of obstacles by $s$ [red area], and expansion of obstacles by $s-\delta$ [grey area]. Shortest paths generated from roadmaps $(\overline{\vertices},\overline{\edges})$ [solid blue lines] and $(\underline{\vertices},\underline{\edges})$ [dotted blue lines] are shown, where assigned robot-goal pairs [thick] and paths corresponding to assignment weights that are not contained in the allowable intervals [yellow background] are highlighted.}
    \label{fig:case_study}
\end{figure*}

Figure~\ref{fig:case_study} shows the results of Algorithm~\ref{alg:approach} using a deterministic sampling based on triangular tiling \cite{Janson2018IJRR}. The initial sample size for the path length bounding schemes is selected to be $\nmin = 1024$ and the sample increase factor is $\alpha = 4$. The maximum sample size is set to $\nmax = 66688$. We note for this choice of parameters we have $3\hat{D} < s$ in the first iteration and \eqref{eq:minIncrease} is satisfied. The margin tuning parameter is set to $\zeta = 0.1$ and the radius tuning parameter is set to $\eta = 0.1$. In each iteration of the \textbf{while}-loop, two paths are generated for each robot-goal pair: one, shown with a solid line, lies in $\free^s$ based on roadmap graph $(\overline{\vertices},\overline{\edges})$ and one, shown with a dotted line, lies in $\free^{s - \delta}$ based on roadmap graph $(\underline{\vertices},\underline{\edges})$. The lengths of these paths are used to determine the upper and lower bounds on the shortest paths, where the lower bound computation includes the scaling factor $\beta= 1-\frac{2\hat{D}}{r}$. The candidate assignment is made based on the averages of the upper and lower bounds. The resulting robot-goal pairings are illustrated by thick lines. The allowable ranges of path lengths for the optimality of the candidate assignments are given in Table~\ref{tab:allowableIntervals}. The numerical values of the other evolving variables are listed in Table~\ref{tab:values_diffvalues}. Note that the computation time, $t$, is provided for a non-optimised code implementation in Matlab where the bounds for all 15 agent-tasks pairs are computed in series.          

\begin{table*}
    \caption{\label{tab:allowableIntervals} Allowable range of path length in [m] for each agent-goal pair. Assigned pairs are indicated with box framing, and pairs violating the stopping condition of Algorithm~\ref{alg:approach} are highlighted in yellow.}
  \centering
  \begin{tabular}{c|| r@{,}l r@{,}l r@{,}l || r@{,}l r@{,}l r@{,}l || r@{,}l r@{,}l r@{,}l ||}
    &\multicolumn{6}{c||}{{\bf Iteration 1 (n = 1024)}} & \multicolumn{6}{c||}{{\bf Iteration 2 (n = 4168)}} & \multicolumn{6}{c||}{{\bf Iteration 3 (n = 16672)}} \\ 
    & \multicolumn{2}{c}{Goal 1} & \multicolumn{2}{c}{Goal 2} & \multicolumn{2}{c||}{Goal 3} & \multicolumn{2}{c}{Goal 1} & \multicolumn{2}{c}{Goal 2} & \multicolumn{2}{c||}{Goal 3} & \multicolumn{2}{c}{Goal 1} & \multicolumn{2}{c}{Goal 2} & \multicolumn{2}{c||}{Goal 3} \\
    \cline{8-9} \cline{14-15} 
    Agent 1  &  ($-\infty$  & $\infty$) &  [\hl{1.94}  & $\infty$) &  [\hl{1.94} & $\infty$) & \multicolumn{1}{|r@{,}}{ ($-\infty$ } & \multicolumn{1}{@{}l|}{ \hl{1.47}] } &  [1.66 & $\infty$) & ($-\infty$ & $\infty$) & \multicolumn{1}{|r@{,}}{ ($-\infty$} & \multicolumn{1}{@{}l|}{ 1.61] } & [1.80 & $\infty$) & ($-\infty$ & $\infty$) \\ 
    \cline{8-9} \cline{14-15} \cline{2-3} \cline{10-11} \cline{16-17}  
    Agent 2  & \multicolumn{1}{|r@{,}}{ ($-\infty$} &  \multicolumn{1}{@{}l|}{\hl{1.22}] } &  [\hl{1.94}  & $\infty$) &  [\hl{1.94} & $\infty$) & ($-\infty$ & $\infty$) & \multicolumn{1}{|r@{,}}{ ($-\infty$ } & \multicolumn{1}{@{}l|}{ \hl{1.66}] } & ($-\infty$ & $\infty$) & ($-\infty$ & $\infty$) & \multicolumn{1}{|r@{,}}{ ($-\infty$ } & \multicolumn{1}{@{}l|}{1.80] } & ($-\infty$ & $\infty$)\\
    \cline{2-3} \cline{10-11} \cline{16-17} \cline{4-5}    
    Agent 3  & ($-\infty$ & $\infty$) & \multicolumn{1}{|r@{,}}{ [\hl{1.22} } & \multicolumn{1}{@{}l|}{ \hl{1.94}] } & [\hl{1.94} & $\infty$) &  [1.47 & $\infty$) & [\hl{1.66} & $\infty$) & [1.47 & $\infty$) & [1.61 & $\infty$) & [1.80 & $\infty$) &  [1.59 & $\infty$) \\
    \cline{4-5} \cline{6-7} \cline{12-13} \cline{18-19}    
    Agent 4  &  ($-\infty$ & $\infty$) & ($-\infty$ & $\infty$) & \multicolumn{1}{|r@{,}}{ ($-\infty$ } & \multicolumn{1}{@{}l|}{1.94] } & [1.47 & $\infty$) & [\hl{1.66} & $\infty$) & \multicolumn{1}{|r@{,}}{ ($-\infty$ } & \multicolumn{1}{@{}l|}{1.46] } & [1.61 & $\infty$) & [1.80 & $\infty$) & \multicolumn{1}{|r@{,}}{ ($-\infty$ } & \multicolumn{1}{@{}l|}{1.59] } \\
    \cline{6-7} \cline{12-13} \cline{18-19}   
    Agent 5  & ($-\infty$ & $\infty$) & [\hl{1.94} & $\infty$) & [\hl{1.94} & $\infty$) & [\hl{1.47} & $\infty$) & [1.66 & $\infty$) & [1.47 & $\infty$) & [1.61 & $\infty$) & [1.80 & $\infty$) & [1.59 & $\infty$) 
  \end{tabular}
\end{table*} 

In the first iteration the roadmaps are coarse, see Figure~\ref{fig:case_study_1}. Given the resulting small value of the scaling factor, $\beta$, and the significant difference between $\free^s$ and $\free^{s - \delta}$, we observe that the upper and lower bounds are not tight enough to satisfy the stopping criteria on Line~9 of Algorithm~\ref{alg:approach}, i.e. $[l_{i,j},u_{i,j}] \not\subseteq [w_{i,j} - \lambdal_{i,j},w_{i,j}+\lambdau_{i,j}]$, for several agent-goals pairs $(i,j) \in \ant$ highlighted with yellow in Figure~\ref{fig:case_study_1} and Table~\ref{tab:allowableIntervals}. In the second iteration, the roadmaps are generated from more samples. As shown in Figure~\ref{fig:case_study_2}, the paths are therefore smoother, the obstacles have been expanded slightly more by shrinking the margin parameter $\delta$, and the scaling factor, $\beta$, has increased. We observe that the candidate assignment has changed in comparison to the first iteration, see the framed cells of Table~\ref{tab:allowableIntervals}. But still, some of the path bounds lie outside of the allowable range, shown again in yellow. The sample size is therefore increased further. In the third iteration, illustrated in Figure~\ref{fig:case_study_3}, the bounds on the optimal paths have converged enough to certify the candidate assignment as optimal and the algorithm terminates. 

\begin{table}
    \caption{\label{tab:values_diffvalues} Evolving variable values in case study.}
  \centering
  \begin{tabular}{c c || r || r || r || r ||}
     & $n$ &  1024 & 4168 & 16672 &  66688\\
     & $\hat{D}$ & 0.0706m & 0.0353m & 0.0177m & 0.0088m\\
     \hline\hline
     \multirow{3}{*}{$\zeta = 0.1$}
     & \cellcolor{black!10} $\delta$ & \cellcolor{black!10} 0.290m &  \cellcolor{black!10} 0.270m & \cellcolor{black!10} 0.252m & \cellcolor{black!10} 0.235m\\
     & \cellcolor{black!20} $r$ & \cellcolor{black!20} 0.211m & \cellcolor{black!20}  0.219m & \cellcolor{black!20} 0.215m & \cellcolor{black!20} 0.206m \\
     \multirow{3}{*}{$\eta = 0.1$} & \cellcolor{black!30} $\beta$ & \cellcolor{black!30} 0.332 & \cellcolor{black!30} 0.677 & \cellcolor{black!30} 0.836 & \cellcolor{black!30} 0.914\\
     & \cellcolor{black!40} $t$ & \cellcolor{black!40} 1.5s & \cellcolor{black!40} 13.0s & \cellcolor{black!40} 155.4s & \cellcolor{black!40} 2075.3s \\
     & \cellcolor{black!50} $Q$ & \cellcolor{black!50} {\tt false} & \cellcolor{black!50} {\tt false} & \cellcolor{black!50} {\tt true} & \cellcolor{black!50} {\tt true}  \\
     \hline\hline
     \multirow{3}{*}{$\zeta = 0.1$}
     & \cellcolor{black!10} $\delta$ & \cellcolor{black!10} 0.290m &  \cellcolor{black!10} 0.270m & \cellcolor{black!10} 0.252m & \cellcolor{black!10} 0.235m\\
     & \cellcolor{black!20} $r$ & \cellcolor{black!20} 0.180m & \cellcolor{black!20}  0.153m & \cellcolor{black!20} 0.135m & \cellcolor{black!20} 0.122m \\
     \multirow{3}{*}{$\eta = 0.5$} & \cellcolor{black!30} $\beta$ & \cellcolor{black!30} 0.216 & \cellcolor{black!30} 0.538 & \cellcolor{black!30} 0.738 & \cellcolor{black!30} 0.855\\
     & \cellcolor{black!40} $t$ & \cellcolor{black!40} 1.2s & \cellcolor{black!40} 6.2s & \cellcolor{black!40} 71.0s & \cellcolor{black!40} 864.6s \\
     & \cellcolor{black!50} $Q$ & \cellcolor{black!50} {\tt false} & \cellcolor{black!50} {\tt false} & \cellcolor{black!50} {\tt false} & \cellcolor{black!50} {\tt true}  \\
     \hline\hline
     \multirow{3}{*}{$\zeta = 0.5$}
     & \cellcolor{black!10} $\delta$ & \cellcolor{black!10} 0.252m &  \cellcolor{black!10} 0.178m & \cellcolor{black!10} 0.126m & \cellcolor{black!10} 0.089m\\
     & \cellcolor{black!20} $r$ & \cellcolor{black!20} 0.178m & \cellcolor{black!20} 0.136m & \cellcolor{black!20} 0.101m & \cellcolor{black!20} 0.074m \\
     \multirow{3}{*}{$\eta = 0.1$} & \cellcolor{black!30} $\beta$ & \cellcolor{black!30} 0.204 & \cellcolor{black!30} 0.480 & \cellcolor{black!30} 0.651 & \cellcolor{black!30} 0.762 \\
     & \cellcolor{black!40} $t$ & \cellcolor{black!40} 1.2s & \cellcolor{black!40} 5.6s & \cellcolor{black!40} 35.6s & \cellcolor{black!40} 285.8s \\
     & \cellcolor{black!50} $Q$ & \cellcolor{black!50} {\tt false} & \cellcolor{black!50} {\tt false} & \cellcolor{black!50} {\tt false} & \cellcolor{black!50} {\tt false}  \\
     \hline\hline
  \end{tabular}
\end{table}

\subsection{Design Choices and Comparisons}\label{sec:comparisions}
\paragraph{Tuning parameters}\label{sec:diffParameters}
From Table~\ref{tab:values_diffvalues} we see how different values of the tuning parameters $\eta,\zeta\in (0,1)$, affect the performances. For larger $\eta$ the connection radius, $r$, follows the lower bound of its range defined in Assumption~\ref{assum:rightRadius} more closely. For larger $\zeta$, the margin $\delta$ decreases faster which means that the obstacles get extended more quickly but also decreases $r$. Lower values of $r$ lead to smaller values of the scaling factor, $\beta$, which in the considered case makes the algorithm require more iterations to certify. However, because a smaller $r$ means checking fewer samples, the computation time for a given sample size decreases. 

\paragraph{Allowable interval bounds}
We compare the method chosen for obtaining the lexicographic largest allowable perturbations from~\cite{Michael2022EJOR} against an alternative method for {\tt AllowableIntervals} that considers a uniform bound on all perturbations \cite{Sotskov1995DAM}. Note that by construction uniform allowable intervals are smaller or equal to the lexicographic largest ones. Using the uniform allowable intervals for the scenario above results in the uncertainty being too large for 5 robot-goal pairs and no certificate being returned. 

\paragraph{Sampling scheme}
Because triangular tiling provides the lowest possible dispersion for two-dimensional configurations spaces \cite{LaValle2006Book}, we know that alternative sampling schemes lead to looser bounds on the shortest path lengths. If we consider randomised sampling as is used in \ac{prm}$^*$, the probabilistic dispersion bounds decrease slower than deterministic ones, see~\cite{Janson2015IJoR}, and for the considered scenario no certificate can be found within the maximal sample size limit, i.e., for $n=66688$, we have
$\delta = 0.268$m, $r=0.219$m, $\beta = 0.70$, $t = 2440.6$s, and $Q = 0$.

\paragraph{Preformance analysis} To evaluate the proposed method we compare it to two naive approaches. We define the \emph{simple-naive} approach to be a bottleneck assignment with the weight equal to the upper bounds on the path lengths found with the accuracy given by $\nmin$ whereas the \emph{complex-naive} approach is defined by making a bottleneck assignment with weights equal to the upper bounds found with the accuracy given by $\nmax$. For the scenario above, we observe that the assignment obtained with the simple-naive approach is not optimal when considering the weights used for the complex-naive approach. Because the assignment obtained with Algorithm~\ref{alg:approach} is certified for $n = \nmax$, we know that the complex-naive assignment is optimal.  

The benefit of starting the planning with low accuracy and increasing it until a certificate is reached compared to just considering the maximal accuracy can be measured in the time saved. Note that the time saved is negative if the assignment is certified in the last iteration or not certified at all. In the considered scenario the time saved normalised by the time it takes for the complex-naive approach is 93\%.

\subsection{Randomised configurations}\label{sec:randomConfiguations}
To analyse the approach in a more general setting we consider randomised examples. Different maps are generated in the configuration space $\cspace = [-1\textrm{m},1\textrm{m}]^2$ with obstacles consisting of balls with randomised radii and centres. Only obstacle configurations that satisfy Assumption~\ref{assum:existsPath} are considered. The position of the robots and goals is also randomised within the set of safe positions. Table~\ref{tab:maps} lists the results of applying Algorithm~\ref{alg:approach} to the randomised configurations with the same tuning parameters as in Section~\ref{sec:results} expect $\nmin = 310$ and $\nmax = 19840$. For each listed choice of the number of agents $|\agents|$, goals $|\tasks|$, and obstacles $m$, and the required safety distance, $s$, 100 simulations are considered. The results include statistics on how often the assignment is certified (Certification), how often the simple-naive assignment is not optimal for the complex-naive paths (Simple fails), and the average normalised time saved by iteratively increasing the sample size (Savings).

\begin{table}
    \caption{\label{tab:maps} Results for randomised configurations.}
  \centering
  \begin{tabular}{r r r r | r r r}
    $|\agents|$ & $|\tasks|$ & m & $s$ &Certification & Simple fails & Savings \\
     \hline
     3 & 2 & 5 & 0.30m & 75\% & 5\% & 59\% \\
     5 & 3 & 3 & 0.30m & 78\% & 5\% & 61\% \\
     5 & 3 & 5 & 0.25m & 67\% & 1\% & 38\% \\
     5 & 3 & 5 & 0.30m & 73\% & 6\% & 55\% \\
     5 & 3 & 5 & 0.35m & 80\% & 4\% & 62\% \\
     5 & 3 & 7 & 0.30m & 82\% & 8\% & 62\% \\
     7 & 4 & 5 & 0.30m & 78\% & 5\% & 61\% 
  \end{tabular}
\end{table} 

\section{Conclusions}\label{sec:conclusions}

We introduced a method that certifies if polynomial-time approximations of the shortest obstacle-avoiding paths between robots and goal are sufficient to guarantee bottleneck optimal goal assignment. If no certificate is returned, the accuracy of the polynomial approximation procedure is increased and repeated until either sufficiently tight bounds are computed or a predefined maximum complexity is reached.  

Numerical examples demonstrated the possibility of certifying the optimality of the goal assignment using only estimates for the optimal path lengths. We observed however that to achieve tight bounds, a large number of samples may be required which quickly leads to computational and storage challenges. In future work, the approach should be implemented efficiently and tested in real-world environments and benchmarking tools such as \cite{Chamzas2022RAL}. Our current work focuses on alternative methods to bound the shortest paths given some knowledge about the obstacles. Extending research should investigate the optimal adaptation of the planning accuracy between iterations and identify conditions under which a certificate of optimality is returned in finite time. Furthermore, we are interested in extending this work to the case of safety rather than shortest path objectives~\cite{Tihanyi2023ECC}.

\bibliographystyle{IEEEtran}
\bibliography{CertifiedBottleneck}

\begin{thebibliography}{10}
\providecommand{\url}[1]{#1}
\csname url@rmstyle\endcsname
\providecommand{\newblock}{\relax}
\providecommand{\bibinfo}[2]{#2}
\providecommand\BIBentrySTDinterwordspacing{\spaceskip=0pt\relax}
\providecommand\BIBentryALTinterwordstretchfactor{4}
\providecommand\BIBentryALTinterwordspacing{\spaceskip=\fontdimen2\font plus
\BIBentryALTinterwordstretchfactor\fontdimen3\font minus
  \fontdimen4\font\relax}
\providecommand\BIBforeignlanguage[2]{{%
\expandafter\ifx\csname l@#1\endcsname\relax
\typeout{** WARNING: IEEEtran.bst: No hyphenation pattern has been}%
\typeout{** loaded for the language `#1'. Using the pattern for}%
\typeout{** the default language instead.}%
\else
\language=\csname l@#1\endcsname
\fi
#2}}

\bibitem{Turpin2014AR}
M.~Turpin, K.~Mohta, N.~Michael, and V.~Kumar, ``{Goal Assignment and
  Trajectory Planning for Large Teams of Interchangeable Robots},''
  \emph{Auton. Robots}, vol.~37, no.~4, pp. 401--415, 2014.

\bibitem{MacAlpine2015CoAI}
P.~MacAlpine, E.~Price, and P.~Stone, ``{SCRAM: Scalable Collision-Avoiding
  Role Assignment with Minimal-Makespan for Formational Positioning},'' in
  \emph{Natl. Conf. Artif. Intell.}, vol.~3, 2015, pp. 2096--2102.

\bibitem{Wood2020RAL}
T.~A. Wood, M.~Khoo, E.~Michael, C.~Manzie, and I.~Shames, ``{Collision
  Avoidance Based on Robust Lexicographic Task Assignment},'' \emph{IEEE Robot.
  Autom. Lett.}, vol.~5, no.~4, pp. 5693--5700, 2020.

\bibitem{Burkard2012Book}
R.~E. Burkard, M.~Dell'Amico, and S.~Martello, \emph{{Assignment Problems,
  Revised Reprint}}.\hskip 1em plus 0.5em minus 0.4em\relax Siam, 2012.

\bibitem{Zhang2021TCST}
X.~Zhang, A.~Liniger, and F.~Borrelli, ``{Optimization-Based Collision
  Avoidance},'' \emph{IEEE Trans. Control Syst. Technol.}, vol.~29, no.~3, pp.
  972--983, 2021.

\bibitem{Marcucci2021Arxiv}
\BIBentryALTinterwordspacing
T.~Marcucci, J.~Umenberger, P.~A. Parrilo, and R.~Tedrake, ``{Shortest Paths in
  Graphs of Convex Sets},'' pp. 1--26, 2021. [Online]. Available:
  \url{http://arxiv.org/abs/2101.11565}
\BIBentrySTDinterwordspacing

\bibitem{Karaman2011IJRR}
S.~Karaman and E.~Frazzoli, ``{Sampling-Based Algorithms for Optimal Motion
  Planning},'' \emph{Int. J. Rob. Res.}, vol.~30, no.~7, pp. 846--894, 2011.

\bibitem{Khoo2023TAC}
M.~Khoo, T.~A. Wood, C.~Manzie, and I.~Shames, ``{A Distributed Augmenting Path
  Approach for the Bottleneck Assignment Problem},'' \emph{IEEE Trans. Autom.
  Control (in press)}, 2023.

\bibitem{Kavraki1998ToRaA}
L.~E. Kavraki, M.~N. Kolountzakis, and J.~C. Latombe, ``{Analysis of
  Probabilistic Roadmaps for Path Planning},'' \emph{IEEE Trans. Robot.
  Autom.}, vol.~14, no.~1, pp. 166--171, 1998.

\bibitem{LaValle2006Book}
S.~M. LaValle, \emph{{Planning Algorithms}}.\hskip 1em plus 0.5em minus
  0.4em\relax Cambridge University Press, 2006.

\bibitem{Nissoux1999IROS}
C.~Nissoux, T.~Simeon, and J.~P. Laumond, ``{Visibility Based Probabilistic
  Roadmaps},'' \emph{IEEE Int. Conf. Intell. Robot. Syst.}, vol.~3, pp.
  1316--1321, 1999.

\bibitem{Janson2018IJRR}
L.~Janson, B.~Ichter, and M.~Pavone, ``{Deterministic Sampling-Based Motion
  Planning: Optimality, Complexity, and Performance},'' \emph{Int. J. Rob.
  Res.}, vol.~37, no.~1, pp. 46--61, 2018.

\bibitem{Sotskov1995DAM}
Y.~N. Sotskov, V.~K. Leontev, and E.~N. Gordeev, ``{Some Concepts of Stability
  Analysis in Combinatorial Optimization},'' \emph{Discret. Appl. Math.},
  vol.~58, no.~2, pp. 169--190, 1995.

\bibitem{Nam2015ICRA}
C.~Nam and D.~A. Shell, ``{When to do Your Own Thing: Analysis of Cost
  Uncertainties in Multi-Robot Rask Allocation at Run-Time},'' in \emph{IEEE
  Int. Conf. Robot. Autom.}, no. June, 2015, pp. 1249--1254.

\bibitem{Michael2022EJOR}
E.~Michael, T.~A. Wood, C.~Manzie, and I.~Shames, ``{Sensitivity Analysis for
  Bottleneck Assignment Problems},'' \emph{Eur. J. Oper. Res.}, vol. 303,
  no.~1, pp. 159--167, 2022.

\bibitem{Pentico2007EJoOR}
D.~W. Pentico, ``{Assignment Problems: A Golden Anniversary Survey},''
  \emph{Eur. J. Oper. Res.}, vol. 176, no.~2, pp. 774--793, 2007.

\bibitem{Sukharev1971CMMP}
A.~G. Sukharev, ``{Optimal Strategies of the Search for an Extremum},''
  \emph{USSR Comput. Math. Math. Phys.}, vol.~11, no.~4, pp. 119--137, 1971.

\bibitem{Deheuvels1983ZWVG}
P.~Deheuvels, ``{Strong Bounds for Multidimensional Spacings},''
  \emph{Zeitschrift f{\"{u}}r Wahrscheinlichkeitstheorie und Verwandte
  Gebiete}, vol.~64, no.~4, pp. 411--424, 1983.

\bibitem{Janson2015IJoR}
L.~Janson, E.~Schmerling, A.~Clark, and M.~Pavone, ``{Fast Marching Tree: A
  Fast Marching Sampling-Based Method for Optimal Motion Planning in Many
  Dimensions},'' \emph{Int. J. Rob. Res.}, vol.~34, no.~7, pp. 883--921, 2015.

\bibitem{Chamzas2022RAL}
C.~Chamzas, C.~Quintero-Pena, Z.~Kingston, A.~Orthey, D.~Rakita, M.~Gleicher,
  M.~Toussaint, and L.~E. Kavraki, ``{MotionBenchMaker: A Tool to Generate and
  Benchmark Motion Planning Datasets},'' \emph{IEEE Robot. Autom. Lett.},
  vol.~7, no.~2, pp. 882--889, 2022.

\bibitem{Tihanyi2023ECC}
D.~Tihanyi, Y.~Lu, O.~Karaca, and M.~Kamgarpour, ``{Multi-Robot Task Allocation
  for Safe Planning Against Stochastic Hazard Dynamics},'' in \emph{Eur.
  Control Conf. (in press)}, 2023.

\end{thebibliography}

\end{document}